\documentclass{article}
\usepackage{iclr2017_conference,times}
\usepackage{graphicx}
\usepackage{float}
\usepackage{subcaption}
\usepackage{amsfonts}
\usepackage{amsthm, amsmath}
\usepackage{tikz}
\usepackage{pgfplots}
\usepackage{wrapfig}
\usepackage{tablefootnote}

\usepackage[titletoc,toc,title]{appendix}
\theoremstyle{definition}
\newtheorem{observation}[]{Observation}

%
%
%
%
\title{Multi-task learning with deep model based reinforcement learning}
\author{Asier Mujika  \\
	ETH Z{\"u}rich \\
	Z{\"u}rich, Switzerland\\
\texttt{asierm@student.ethz.ch} \\	
	}

\date{\today} 
\begin{document}
\maketitle

\begin{abstract}
In recent years, model-free methods that use deep learning have achieved great success in many different reinforcement learning environments. Most successful approaches focus on solving a single task, while multi-task reinforcement learning remains an open problem. In this paper, we present a model based approach to deep reinforcement learning which we use to solve different tasks simultaneously. We show that our approach not only does not degrade but actually benefits from learning multiple tasks. For our model, we also present a new kind of recurrent neural network inspired by residual networks that decouples memory from computation allowing to model complex environments that do not require lots of memory. 
\end{abstract}

\section{Introduction}
\paragraph{} Recently, there has been a lot of success in applying neural networks to reinforcement learning, achieving super-human performance in many ATARI games (\cite{Mnih2015}; \cite{Mnih2016}). Most of these algorithms are based on $Q$-learning, which is a model free approach to reinforcement learning. This approaches learn which actions to perform in each situation, but do not learn an explicit model of the environment. Apart from that, learning to play multiple games simultaneously remains an open problem as these approaches heavily degrade when increasing the number of tasks to learn.

\paragraph{} In contrast, we present a model based approach that can learn multiple tasks simultaneously. The idea of learning predictive models has been previously proposed (\cite{Schmidhuber2015}; \cite{Hotz2016}), but all of them focus on learning the predictive models in an unsupervised way. We propose using the reward as a means to learn a representation that captures only that which is important for the game. This also allows us to do the training in a fully supervised way. In the experiments, we show that our approach can surpass human performance simultaneously on three different games. In fact, we show that transfer learning occurs and it benefits from learning multiple tasks simultaneously.

 \paragraph{} In this paper, we first discuss why $Q$-learning fails to learn multiple tasks and what are its drawbacks. Then, we present our approach, Predictive Reinforcement Learning, as an alternative to overcome those weaknesses. In order to implement our model, we present a recurrent neural network architecture based on residual nets that is specially well suited for our task. Finally, we discuss our experimental results on several ATARI games.

\section{Previous work: Deep $Q$-learning}

\paragraph{} In recent years, approaches that use Deep $Q$-learning have achieved great success, making an important breakthrough when \cite{Mnih2015} presented a neural network architecture that was able to achieve human performance on many different ATARI games, using just the pixels in the screen as input.

\paragraph{} As the name indicates, this approach revolves around the $Q$-function. Given a state $s$ and an action $a$, $Q(s, a)$ returns the expected future reward we will get if we perform action $a$ in state $s$. Formally, the $Q$-function is defined in equation \ref{eq:Q}.

\begin{align}\label{eq:Q}
Q(s, a) = \mathbb{E}_{s'}\left[r + \gamma \max_{a'} Q(s', a') | s, a \right]
\end{align}
\paragraph{} For the rest of this subsection, we assume the reader is already familiar with Deep $Q$-learning and we discuss its main problems. Otherwise, we recommend skipping to the next section directly as none of the ideas discussed here are necessary to understand our model.

\paragraph{} As the true value of the $Q$-function is not known, the idea of Deep $Q$-learning is iteratively approximating this function using a neural network\footnote{We do not explain the process, but \cite{Mnih2015} give a good explanation on how this is done.} which introduces several problems.

\paragraph{} First, the $Q$-values depend on the strategy the network is playing. Thus, the target output for the network given a state-action pair is not constant, since it changes as the network learns. This means that apart from learning an strategy, the network also needs to remember which strategy it is playing. This is one of the main problems when learning multiple tasks, as the networks needs to remember how it is acting on each of the different tasks. \cite{Rusu2015} and \cite{Parisotto2015} have managed to successfully learn multiple tasks using $Q$-learning. Both approaches follow a similar idea: an expert network learns to play a single game, while a multi-tasking network learns to copy the behavior of an expert for each different game. This means that the multi-tasking network does not iteratively approximate the $Q$-function, it just learns to copy the function that the single-task expert has approximated. That is why their approach works, they manage to avoid the problem of simultaneously approximating all the $Q$-functions, as this is done by each single task expert.

\paragraph{} Apart from that, the network has to change the strategy very slightly at each update as drastically changing the strategy would change the $Q$-values a lot and cause the approximation process to diverge/slow-down. This forces the model to interact many times with the environment in order to find good strategies. This is not problematic in simulated environments like ATARI games where the simulation can easily be speed up using more computing power. Still, in real world environments, like for example robotics, this is not the case and data efficiency can be an important issue.

\section{Predictive Reinforcement Learning}

\paragraph{} In order to avoid the drawbacks of Deep $Q$-learning, we present Predictive Reinforcement Learning (PRL). In our approach, we separate the understanding of the environment from the strategy. This has the advantage of being able to learn from different strategies simultaneously while also being able to play strategies that are completely different to the ones that it learns from. We will also argue that this approach makes generalization easier. But before we present it, we need to define what we want to solve.

\subsection{Prediction problem}

\paragraph{} The problem we want to solve is the following: given the current state of the environment and the actions we will make in the future, how is our score going to change through time?

\paragraph{} To formalize this problem we introduce the following notation:

\begin{itemize}
\item $a_i$: The observation of the environment at time $i$. In the case of ATARI games, this corresponds to the pixels of the screen.
\item $r_i$: The total accumulated reward at time $i$. In the case of ATARI games, this corresponds to the in-game score.
\item $c_i$: The control that was performed at time $i$. In the case of ATARI games, this corresponds to the inputs of the ATARI controller: \textit{up}, \textit{right}, \textit{shoot}, etc.
\end{itemize}

\paragraph{} Then, we want to solve the following problem: For a given time $i$ and a positive integer $k$, let the input to our model be an observation $a_i$ and a set of future controls $c_{i+1}, \ldots c_{i+k}$. Then, we want to predict the change in score for the next $k$ time steps, i.e. $(r_{i+1} - r_i), \ldots, (r_{i+k} - r_i)$. Figure \ref{fig:mario} illustrates this with an example.
 
\begin{figure}
\centering
\includegraphics[scale =0.5]{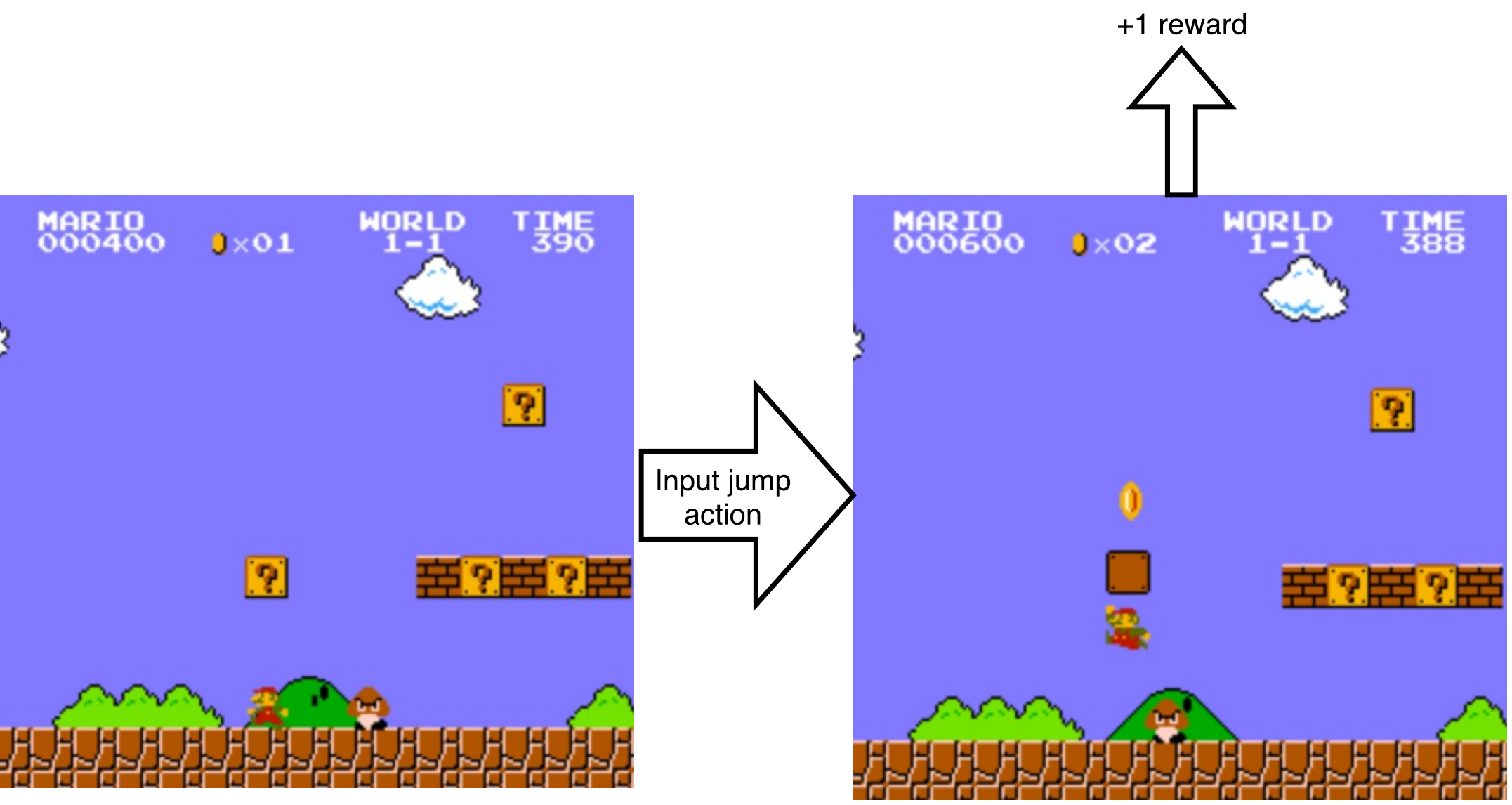}
\caption{We chose $i = 0$ and $k = 1$. We assume $a_0$ to be the pixels in the current image (the left one) and $c_1$ to be the jump action. Then, given that input, we want to predict $r_{1} - r_0$, which is $1$, because we earn a reward from time $0$ to time $1$.
}
\label{fig:mario}
\end{figure}

\paragraph{} Observe that, unlike in $Q$-learning, our predictions do not depend on the strategy being played. The outputs only depend on the environment we are trying to predict. So, the output for a given state-actions pair is always the same or, in the case of non-deterministic environments, it comes from the same distribution.

\subsection{Model}
\paragraph{} We have defined what we want to solve but we still need to specify how to implement a model that will do it. We will use neural networks for this and we will divide it into three different networks as follows:
\begin{itemize}
\item Perception: This network reads a state $a_i$ and converts it to a lower dimensional vector $h_0$ that is used by the Prediction.
\item Prediction: For each $j \in \{1, \ldots, k\}$, this network reads the vector $h_{j-1}$ and the corresponding control $c_{i+j}$ and generates a vector $h_j$ that will be used in the next steps of the Prediction and Valuation. Observe that this is actually a recurrent neural network.
\item Valuation: For each $j \in \{1, \ldots, k\}$, this network reads the current vector $h_j$ of the Prediction and predicts the difference in score between the initial time and the current one, i.e, $r_{i+j} - r_i$.
\end{itemize}

\paragraph{} Figure \ref{fig:RNNdia} illustrates the model. Observe that what we actually want to solve is a supervised learning problem. Thus, the whole model can be jointly trained with simple backpropagation. We will now proceed to explain each of the components in more detail.

\begin{figure}
\centering
	\begin{subfigure}[b]{0.3\textwidth}
        \includegraphics[width=\textwidth]{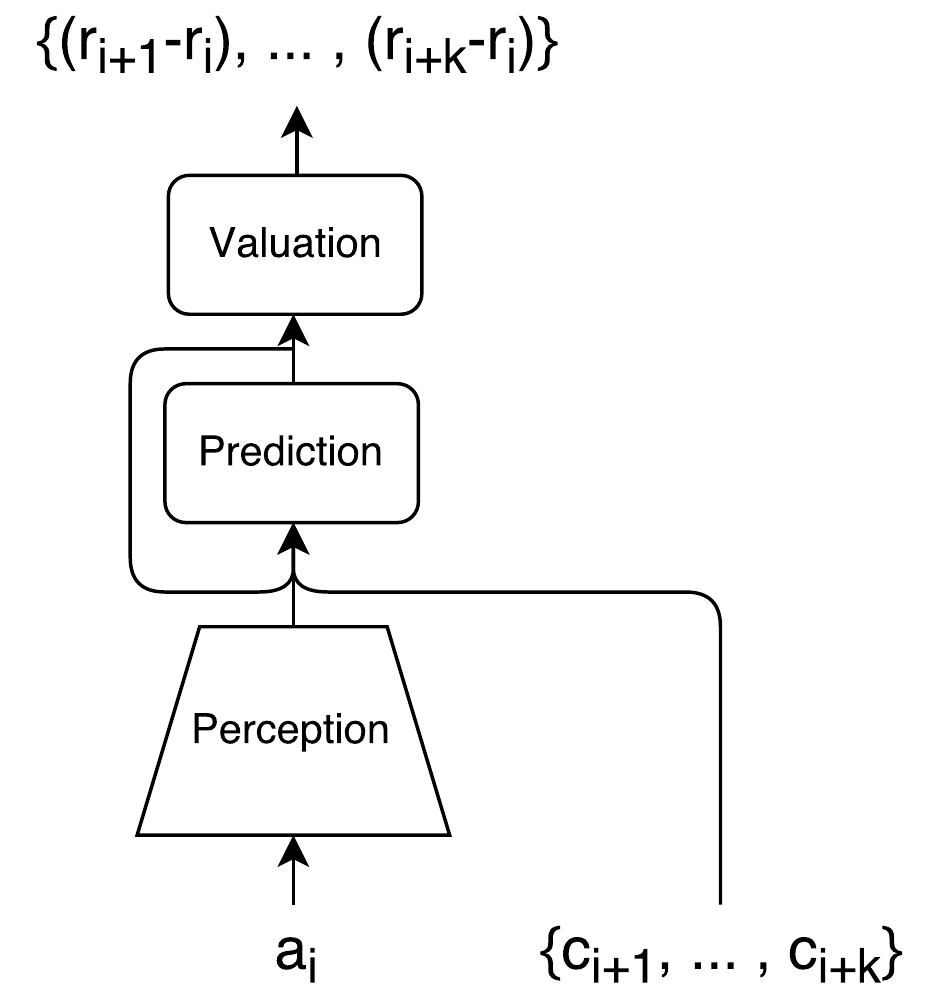}
        \caption{The recurrent model}
        \label{fig:gull}
    \end{subfigure}
   \qquad
    \begin{subfigure}[b]{0.5\textwidth}
        \includegraphics[width=\textwidth]{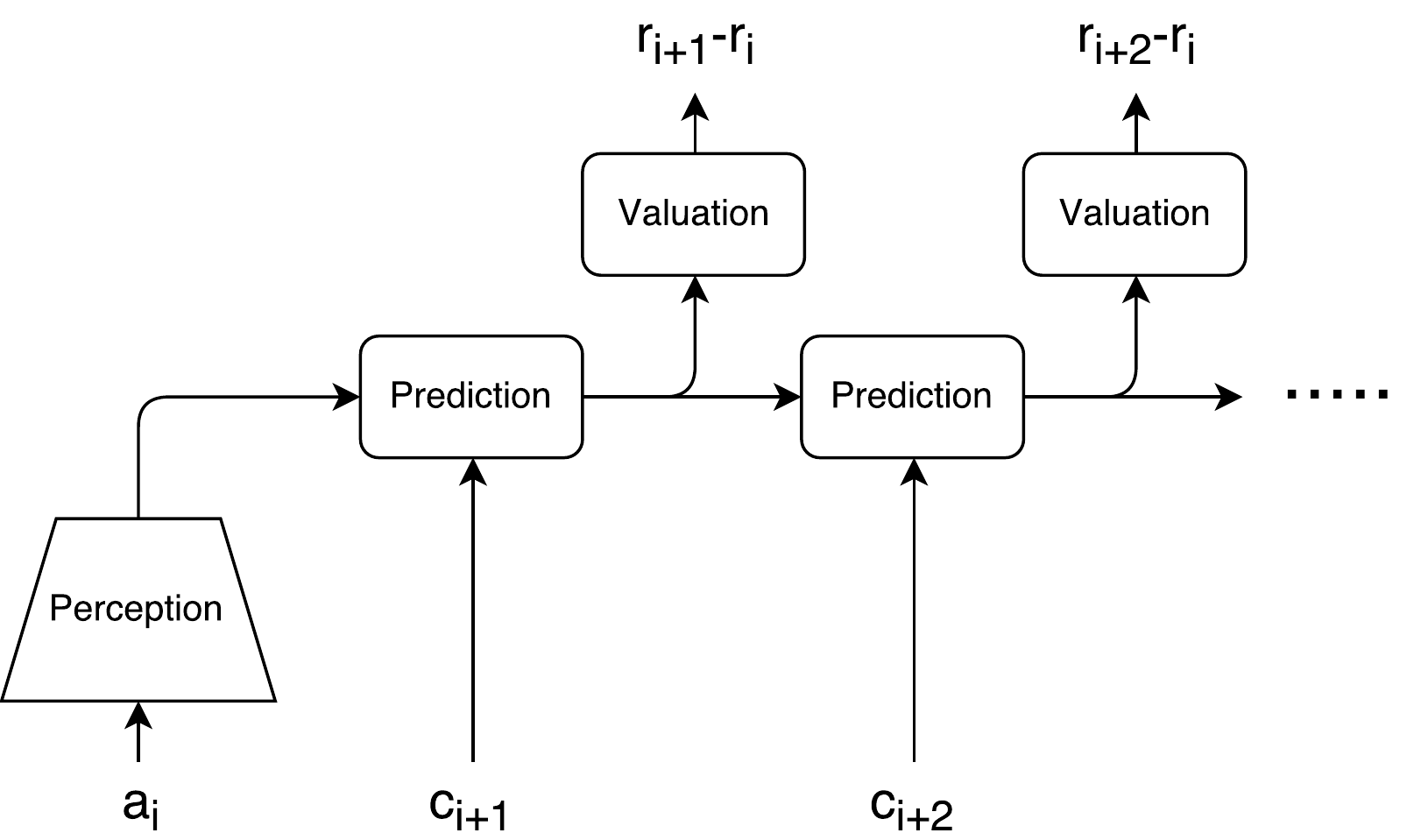}
        \caption{The same model unfolded in time}
        \label{fig:tiger}
    \end{subfigure}
\caption{Diagram of our predictive model.}
\label{fig:RNNdia}
\end{figure}

\subsubsection{Perception}
\paragraph{} The Perception has to be tailored for the kind of observations the environment returns. For now, we will focus only on vision based Perception. As we said before, the idea of this network is to convert the high dimensional input to a low dimensional vector that contains only the necessary information for predicting the score. In the case of video games, it is easy to see that such vector exists. The input will consists of thousands of pixels but all we care about is the position of a few key objects, like for example, the main character or the enemies. This information can easily be encoded using very few neurons. In our experiments, we convert an input consisting of $28K$ pixels into a vector of just $100$ real values. 

\paragraph{} In order to do this, we use deep convolutional networks. These networks have recently achieved super-human performance in very complex image recognition tasks \citep{He2015}. In fact, it has been observed that the upper layers in these models learn lower dimensional abstract representations of the input (\cite{Yosinski2015}, \cite{Karpathy2015}). Given this, it seems reasonable to believe that if we use any of the successful architectures for vision, our model will be able to learn a useful representation that can be used by the Prediction.

\subsubsection{Prediction}
\paragraph{} For the Prediction network, we present a new kind of recurrent network based on residual neural networks \citep{He2015}, which is specially well suited for our task and it achieved better results than an LSTM \citep{Hochreiter1997} with a similar number of parameters in our initial tests.

\paragraph{Residual Recurrent Neural Network (RRNN)} 

We define the RRNN in Figure  \ref{fig:RNNNfor} using the following notation: $LN$ is the layer normalization function \citep{Ba2016} which normalizes the activations to have a median of $0$ and standard deviation of $1$. "$\cdot$" is the concatenation of two vectors. $f$ can be any parameterizable and differentiable function, e.g., a multilayer perceptron.

\begin{figure}[H]
\hspace{.1\linewidth}
    \begin{minipage}{.3\linewidth}
      \begin{align}
        r_i &= f(LN(h_{i-1}) \cdot x_i) \\
        h_i &= h_{i-1} + r_i
        \end{align}
    \end{minipage}\hspace{.1\linewidth}
    \begin{minipage}{.5\linewidth}
       \includegraphics[width=0.4\textwidth]{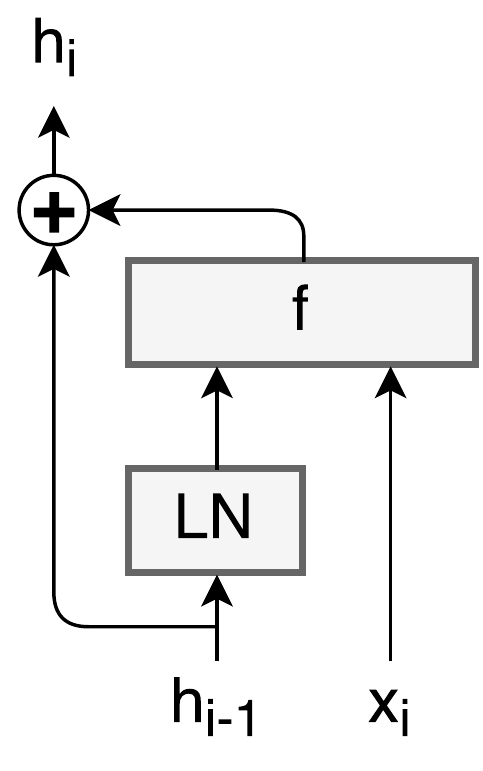}
    \end{minipage} 
    \caption{The equations of the RRNN and a diagram of the network.}
    \label{fig:RNNNfor}
\end{figure}

\paragraph{} As in residual networks, instead of calculating what the new state of the network should be, we calculate how it should change ($r_i$). As shown by  \cite{He2015} this prevents vanishing gradients or optimization difficulties. $LN$ outputs a vector with mean $0$ and standard deviation $1$. As we proof\footnote{The bound is not tight but it is sufficient for our purposes and straightforward to prove.} in Observation \ref{obs}, this prevents internal exploding values that may arise from repeatedly adding $r$ to $h$. It also avoids the problem of vanishing gradients in saturating functions like sigmoid or hyperbolic tangent.

\begin{observation}
Let $x \in \mathbb{R}^n$ be a vector with median $0$ and standard deviation $1$. Then, for all $1 \leq i \leq n$, we get that  $x_i \leq \sqrt{n}$.
\label{obs}
\end{observation}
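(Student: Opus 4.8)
The plan is to reduce the claim to a one-line consequence of the sum-of-squares constraint coming from the standard-deviation hypothesis. First I would translate the two hypotheses into algebra. The assumption that $x$ has standard deviation $1$, together with the centering at $0$, should give $\frac{1}{n}\sum_{j=1}^n x_j^2 = 1$, i.e. $\sum_{j=1}^n x_j^2 = n$. This is the only quantitative fact I expect to need; the role of the ``center $0$'' assumption is precisely to let me write the spread as a raw second moment $\frac1n\sum_j x_j^2$ rather than a moment taken about some nonzero point.

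Next I would isolate a single coordinate. Since every summand $x_j^2$ is nonnegative, keeping only the $i$-th term gives $x_i^2 \le \sum_{j=1}^n x_j^2 = n$. Taking square roots yields $|x_i| \le \sqrt{n}$, and in particular $x_i \le \sqrt{n}$, which is exactly the claimed bound (and in fact the two-sided bound $-\sqrt{n} \le x_i \le \sqrt{n}$, consistent with the footnote's remark that the bound is not tight). There is essentially no computation to grind through beyond this.

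The only point that requires care, and the step I would pin down first, is the precise meaning of ``standard deviation'' and which center it is measured about. If the variance were taken about the sample mean $\bar{x}$ rather than about $0$, I would only obtain $(x_i - \bar{x})^2 \le n$, hence $x_i \le \bar{x} + \sqrt{n}$, and I would then have to invoke the centering hypothesis to dispose of the $\bar{x}$ term. Under the intended reading that the layer normalization forces the relevant center to $0$, so that $\sum_{j} x_j^2 = n$ holds directly, there is no genuine obstacle and the proof is the two-line argument above; the ``median $0$'' assumption then serves only to license treating the spread as a second moment about the origin.
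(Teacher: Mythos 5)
Your proposal is correct and is essentially identical to the paper's proof: both substitute the centering assumption into the standard-deviation formula to get $\sum_{j=1}^n x_j^2 = n$, then drop the nonnegative terms $j \neq i$ and take square roots. Your explicit remark about whether the spread is measured about the mean or about $0$ is a point the paper glosses over (its displayed formula uses $\mu$ but silently sets it to zero), but it does not change the argument.
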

\begin{proof}
Taking into account that the median is $0$ and the standard deviation is $1$, simply substituting the values in the formula for the standard deviation shows the observation.
\begin{align}
 \sigma &= \sqrt{\frac{1}{n} \sum_{j = 1}^{n}(x_j - \mu)^2} \\
1 &= \sqrt{\frac{1}{n} \sum_{j = 1}^{n}x_j^2} \\
\sqrt{n} &= \sqrt{\sum_{j = 1}^{n}x_j^2} \\
 \sqrt{n} &\geq x_i
\end{align}
\end{proof}

\paragraph{} The idea behind this network is mimicking how a video game's logic works. A game has some variables (like positions or speeds of different objects)  that are slightly modified at each step. Our intuition is that the network can learn a representation of these variables ($h$), while $f$ learns how they are transformed at each frame. Apart from that, this model decouples memory from computation allowing to increase the complexity of $f$ without having to increase the number of neurons in $h$. This is specially useful as the number of real valued neurons needed to represent the state of a game is quite small. Still, the function to move from one frame to the next can be quite complex, as it has to model all the interactions between the objects such as collisions, movements, etc.

\paragraph{} Even if this method looks like it may be just tailored for video games, it should work equally well for real world environments. After all, physics simulations that model the real world work in the same way, with some variables that represent the current state of the system and some equations that define how that system evolves over time.

\subsubsection{Valuation}
\paragraph{} The Valuation network reads the $h$ vector at time $i+j$ and outputs the change in reward for that time step, i.e. $r_{i+j} - r_j$. Still, it is a key part of our model as it allows to decouple the representation learned by the Prediction from the reward function. For example, consider a robot in a real world environment. If the Perception learns to capture the physical properties of all surrounding objects (shape, mass, speed, etc.) and the Prediction learns to make a physical simulation of the environment, this model can be used for any possible task in that environment, only the Valuation would need to be changed.

\subsection{Strategy}
\paragraph{} As we previously said, finding an optimal strategy is a very hard problem and this part is the most complicated. So, in order to test our model in the experiments, we opted for hard-coding a strategy. There, we generate a set of future controls uniformly at random and then we pick the one that would maximize our reward, given that the probability of dying is low enough. Because of this, the games we have tried have been carefully selected such that they do not need very sophisticated and long-term strategies. 

\paragraph{} Still, our approach learns a predictive model that is independent of any strategy and this can be beneficial in two ways. First, the model can play a strategy that is completely different to the ones it learns from. Apart from that, learning a predictive model is a very hard task to over-fit. Consider a game with $10$ possible control inputs and a training set where we consider the next $25$ time steps. Then, there are $10^{25}$ possible control sequences. This means that every sequence we train on is unique and this forces the model to generalize. Unfortunately, there is also a downside. Our approach is not able to learn from good strategies because we test our model with many different ones in order to pick the best. Some of these strategies will be quite bad and thus, the model needs to learn what makes the difference between a good and a bad set of moves.

\section{Experiments}
\subsection{Environment}

\paragraph{} Our experiments have been performed on a computer with a GeForce GTX 980 GPU and an Intel Xeon E5-2630 CPU. For the neural network, we have used the Torch7 framework and for the ATARI simulations, we have used Alewrap, which is a Lua wrapper for the Arcade Learning Environment \citep{Bellemare2015}.

\subsection{Model}

\begin{wrapfigure}[21]{R}{0.4\textwidth}
  \vspace{-0.1\textwidth}
     \includegraphics[scale=0.5]{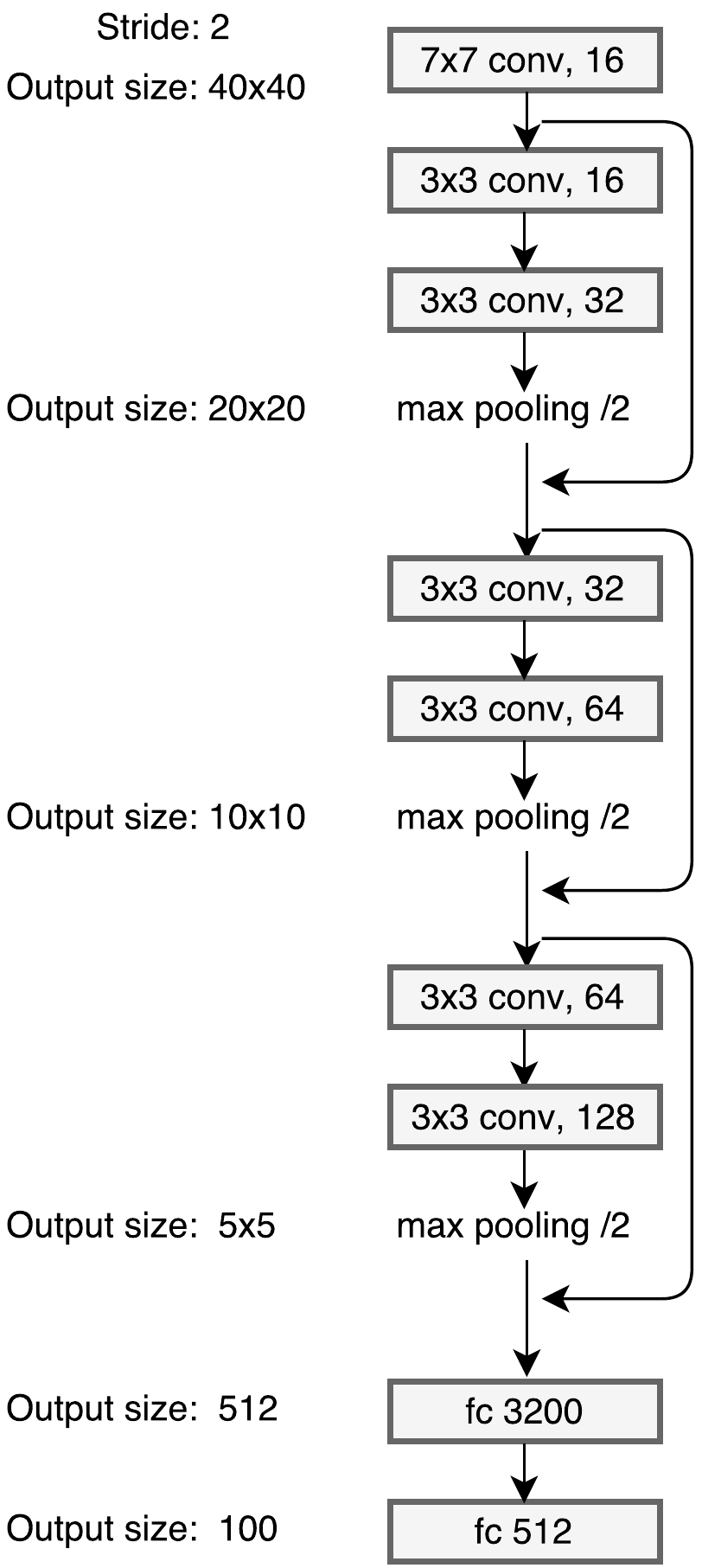}
  \caption{Each layer is followed by a Batch Normalization \citep{Ioffe2015} and a Rectifier Linear Unit.}
  \label{F: perception}
\end{wrapfigure}

\paragraph{} For the Perception, we used a network inspired in deep residual networks \citep{He2015}. Figure \ref{F: perception} shows the architecture. The reason for this, is that even if the Perception is relatively shallow, when unfolding the Prediction network over time, the depth of the resulting model is over $50$ layers deep.

\paragraph{} For the Prediction, we use a  Residual Recurrent Neural Network. Table \ref{T:prediction} describes the network used for the $f$ function. Finally, Table \ref{T:valuation} illustrates the Valuation network.

\begin{table}
    \begin{minipage}{.4\linewidth}
      \centering
        \begin{tabular}{| c | c | c |}
            \hline
             & Input & Output  \\ \hline
            ReLU + Linear & 100 + 3 & 500 \\ \hline
            ReLU + Linear & 500 & 100\\ \hline
        \end{tabular}
        \caption{$f$ function of the Prediction network. We apply the non-linearity before the linear layer, this way we avoid always adding positive values. The ReLU is not applied to the control inputs.}
        \label{T:prediction}
    \end{minipage}\hspace{.1\linewidth}
    \begin{minipage}{.4\linewidth}
      \centering
        \begin{tabular}{| c | c | c |}
            \hline
             & Input & Output  \\ \hline
            LN & 100 & 100  \\ \hline
            Linear + ReLU & 100 & 100 \\ \hline
            Linear + Sigmoid & 100 & 2\\ \hline
        \end{tabular}
        \caption{Valuation network. We apply Layer Normalization to bound the incoming values to the network.\\ \\}
         \label{T:valuation}
    \end{minipage} 
\end{table}

\subsection{Setup}
\paragraph{} In our experiments, we have trained on three different ATARI games simultaneously: Breakout, Pong and Demon Attack.

\paragraph{} We preprocess the images following the same technique of \cite{Mnih2015}. We take the maximum from the last 2 frames to get a single $84\times 84$ black and white image for the current observation. The input to the Perception is a $4\times 84 \times 84$ tensor containing the last $4$ observations. This is necessary to be able to use a feed-forward network for the Perception. If we observed a single frame, it would not be possible to infer the speed and direction of a moving object. Not doing this would force us to use a recurrent network on the  Perception, making the training of the whole model much slower.

\paragraph{} In order to train the Prediction, we unfold the network over time (25 time steps) and treat the model as a feed-forward network with shared weights.

\paragraph{} For our Valuation, network we output two values. First, the probability that our score is higher than in the initial time step. Second, we output the probability of dying. This is trained using cross entropy loss.

\paragraph{} To train the model, we use an off-line learning approach for simplicity. During training we alternate between two steps. First, generate and store data and then, train the model off-line on that data.

\subsection{Generating data} \label{gdata}
\paragraph{} In order to generate the data, we store tuples $(a_i, C = \{c_{i+1}, \ldots c_{i+25}\}, R = \{r_{i+1} - r_i, \ldots r_{i+25} - r_i\})$ as we are playing the game. That is, for each time $i$, we store the following:

\begin{itemize}
\item $a_i$: A $4\times84\times84$ tensor, containing $4$ consecutive black and white frames of size $84 \times 84$ each.
\item $C$: For $j \in \{i+1, \ldots, i+25\}$, each $c_j$ is a $3$ dimensional vector that encodes the control action performed at time $j$. The first dimension corresponds to the \textit{shoot} action, the second to horizontal actions and the third to vertical actions. For example, $[1, -1, 0]$ represent pressing \textit{shoot} and \textit{left}. 
\item $R$: For $j \in \{i+1, \ldots, i+25\}$, we store a 2 dimensional binary vector $r_j$. $r_{j1}$ is $1$ if we die between time $i$ and $j$. $r_{j2}$ is $1$ if we have not lost a life and we also earn a point between time $i$ and $j$.
\end{itemize}

\paragraph{} Initially, we have an untrained model, so at each time step, we pick an action uniformly at random and perform it. For the next iterations, we pick a $k$ and do the following to play the game:

\begin{enumerate}
\item Run the Perception network on the last $4$ frames to obtain the initial vector.
\item Generate $k-1$ sequences of $25$ actions uniformly at random. Apart from that, take the best sequence from the previous time step and also consider it. This gives a total of $k$ sequences. Then, for each sequence, run the Prediction and Valuation networks with the vector obtained in Step 1. 
\item Finally, pick a sequence of actions as follows. Consider only the moves that have a low enough probability of dying. From those, pick the one that has the highest probability of earning a point. If none has a high enough probability, just pick the one with the lowest probability of dying.
\end{enumerate}

\paragraph{} We start with $k=25$ and increase it every few iterations up to $k=200$. For the full details check Appendix \ref{appendix}. In order to accelerate training, we run several games in parallel. This allows to run the Perception, Prediction and Valuation networks together with the ATARI simulation in parallel, which heavily speeds up the generation of data without any drawback.

\subsection{Training}\label{training}

\paragraph{}  In the beginning, we generate $400K$ training cases for each of the games by playing randomly, which gives us a total of $1.2M$ training cases. Then, for the subsequent iterations, we generate $200K$ additional training cases per game ($600K$ in total) and train again on the whole dataset. That is, at first we have $1.2M$ training cases, afterwards $1.8M$, then $2.4M$ and so on.

\paragraph{} The training is done in a supervised way as depicted in Figure \ref{fig:tiger}. $a_i$ and $C$ are given as input to the network and $R$ as target. We minimize  the cross-entropy loss using mini-batch gradient descent. For the full details on the learning schedule check Appendix \ref{appendix}.

\paragraph{} In order to accelerate the process, instead of training a new network in each iteration, we keep training the model from the previous iteration. This has the effect that we would train much more on the initial training cases while the most recent ones would have an ever smaller effect as the training set grows. To avoid this, we assign a weight to each iteration and sample according to these weights during training. Every three iterations, we multiply by three the weights we assign to them. By doing this, we manage to focus on recent training cases, while still preserving the whole training set.

\paragraph{} Observe that we never tell our network which game it is playing, but it learns to infer it from the observation $a_i$. Also, at each iteration, we add cases that are generated using a different neural network. So our training set contains instances generated using many different strategies.

\subsection{Results}

\paragraph{} We have trained a model on the three games for a total of $19$ iterations, which correspond to $4M$ time steps per game ($74$ hours of play at $60$ Hz). Each iteration takes around two hours on our hardware. We have also trained an individual model for each game for $4M$ time steps. In the individual models, we reduced the length of the training such that the number of parameter updates per game is the same as in the multi-task case. Unless some kind of transfer learning occurs, one would expect some degradation in performance in the multi-task model. Figure \ref{fig:animals} shows that not only there is no degradation in Pong and Demon Attack, but also that there is a considerable improvement in Breakout. This confirms our initial belief that our approach is specially well suited for multi-task learning.

\begin{figure}[!t]
    \begin{subfigure}[b]{0.45\textwidth}
        \includegraphics[width=\textwidth]{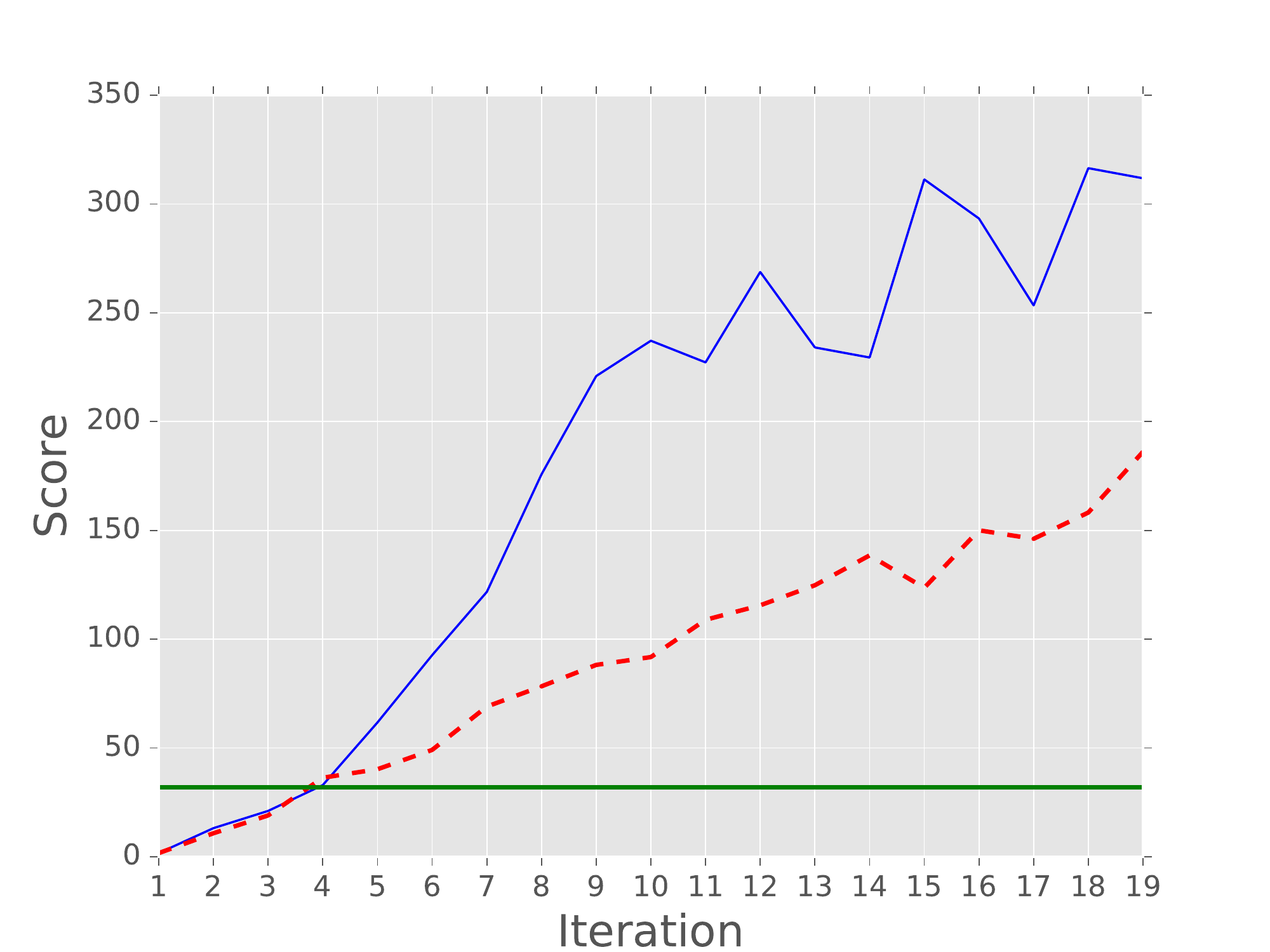}
        \caption{Breakout}
        \label{fig:gull2}
    \end{subfigure}
    \begin{subfigure}[b]{0.45\textwidth}
        \includegraphics[width=\textwidth]{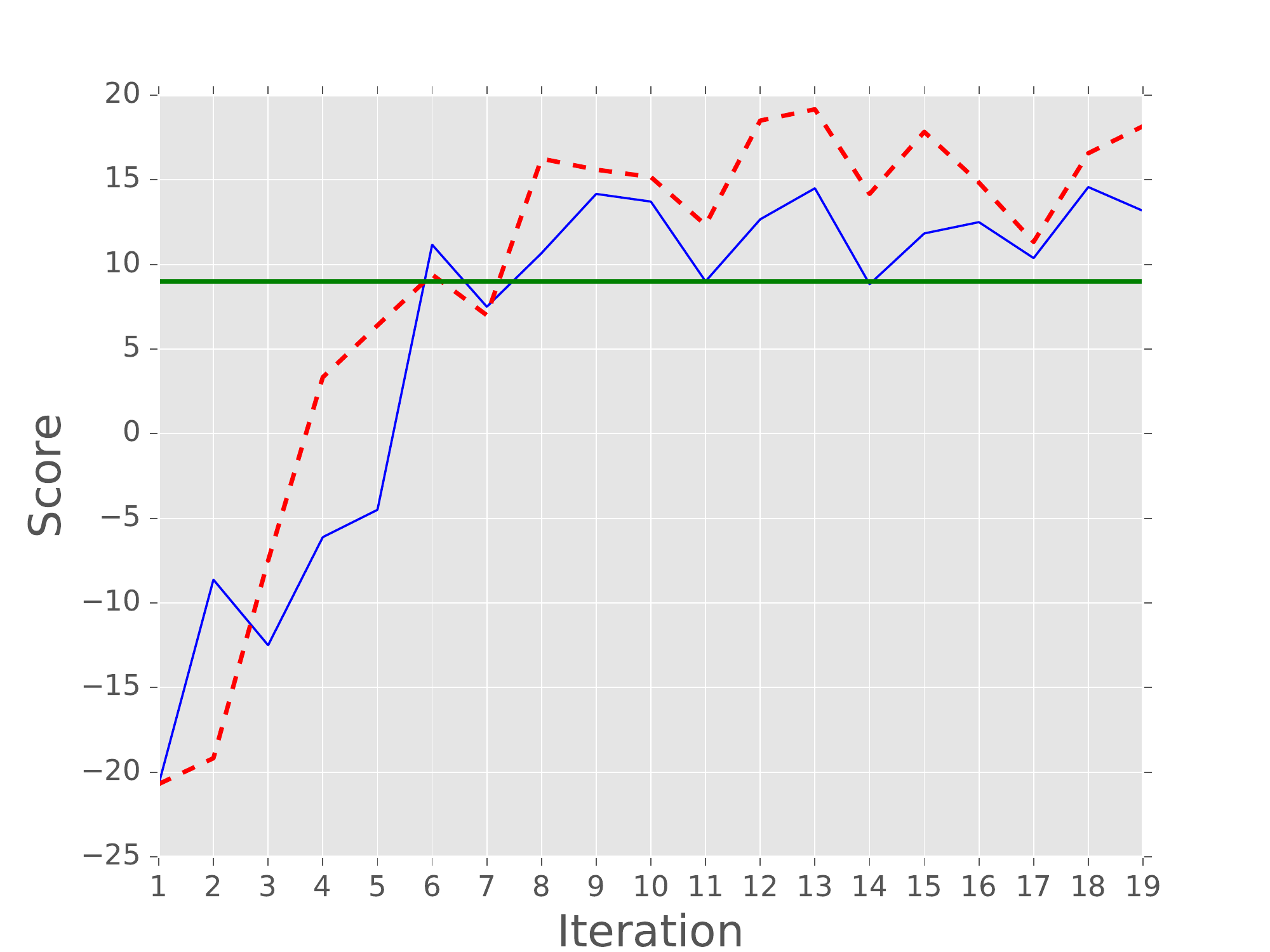}
        \caption{Pong}
        \label{fig:tiger2}
    \end{subfigure}
    \centering
    \begin{subfigure}[b]{0.45\textwidth}
        \includegraphics[width=\textwidth]{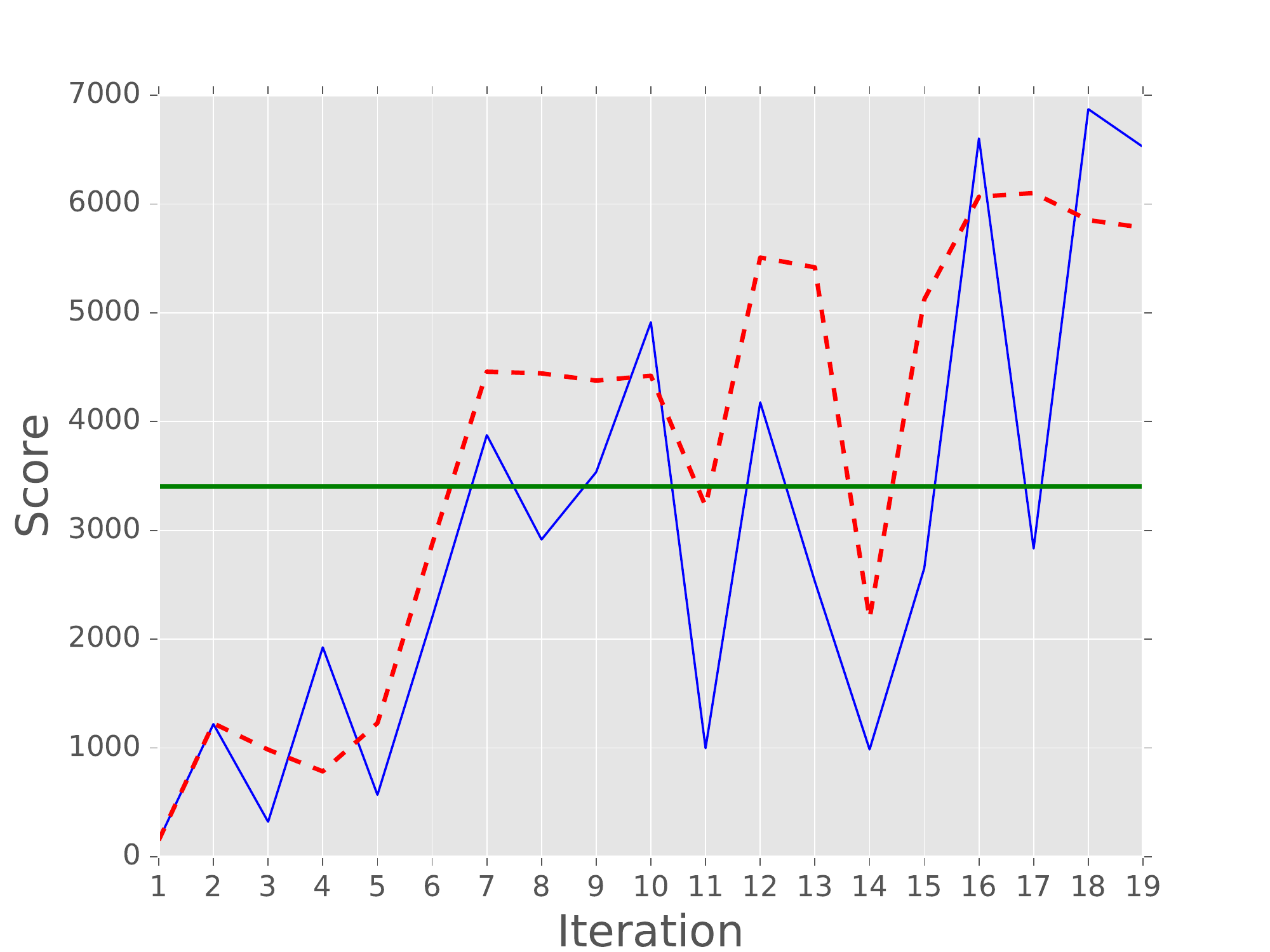}
        \caption{Demon Attack}
        \label{fig:mouse2}
    \end{subfigure}
    \caption{Comparison between an agent that learns the three games simultaneously (continuous blue), one that learns each game individually (dashed red) and the score of human testers (horizontal green) as reported by \cite{Mnih2015}.}\label{fig:animals}
\end{figure}

\paragraph{} We have also argued that our model can potentially play a very different strategy from the one it has observed. Table \ref{T:random} shows that this is actually the case. A model that has learned only from random play is able to play at least $7$ times better.

\paragraph{} Demon Attack's plot in Figure \ref{fig:mouse2} shows a potential problem we mentioned earlier which also happens in the other two games to a lesser extent. Once the strategy is good enough, the agent dies very rarely. This causes the model to "forget" which actions lead to a death and makes the score oscillate.
\begin{table}
\centering
\begin{tabular}{ l | c c c }
   & Pong & Breakout & Demon\\ \hline
   Human score & 9.3 & 31.8 & 3401\\ \hline
  Random & -20.7 & 1.7 & 152\\ \hline
  PRL Iteration 2 & -8.62 & 13.2 & 1220 \\ \hline
\end{tabular}
  \caption{After one iteration Preditive Reinforcement Learning (PRL) has only observed random play but it can play much better. This means that it is able to generalize well to many situations it has not observed during training.}
\label{T:random}
\end{table}

\section{Discussion}
\paragraph{} We have presented a novel model based approach to deep reinforcement learning that opens new lines of research in this area. We have shown that it can beat human performance in three different tasks simultaneously and that it can benefit from learning multiple tasks.

\paragraph{} Still, the model has two areas that can be addressed in future work: long-term dependencies and the instability during training. The first, can potentially be solved by combining our approach with $Q$-learning based techniques. For the instability, balancing the training set or oversampling hard training cases could alleviate the problem.

\paragraph{} Finally, we have also presented a new kind of recurrent network which can be very useful for problems were little memory and a lot of computation is needed.

\subsubsection*{Acknowledgments}
\paragraph{} I thank Angelika Steger and Florian Meier for their hardware support in the final experiments and comments on previous versions of the paper.

\bibliography{main}

\begin{thebibliography}{14}
\providecommand{\natexlab}[1]{#1}
\providecommand{\url}[1]{\texttt{#1}}
\expandafter\ifx\csname urlstyle\endcsname\relax
  \providecommand{\doi}[1]{doi: #1}\else
  \providecommand{\doi}{doi: \begingroup \urlstyle{rm}\Url}\fi

\bibitem[Ba et~al.(2016)Ba, Kiros, and Hinton]{Ba2016}
Jimmy~Lei Ba, Jamie~Ryan Kiros, and Geoffrey~E Hinton.
\newblock {Layer Normalization}.
\newblock \emph{arXiv}, 2016.
\newblock URL \url{https://arxiv.org/abs/1607.06450}.

\bibitem[Bellemare et~al.(2015)Bellemare, Naddaf, Veness, and
  Bowling]{Bellemare2015}
Marc~G. Bellemare, Yavar Naddaf, Joel Veness, and Michael Bowling.
\newblock {The arcade learning environment: An evaluation platform for general
  agents}.
\newblock In \emph{IJCAI International Joint Conference on Artificial
  Intelligence}, volume 2015-January, pp.\  4148--4152, 2015.
\newblock ISBN 9781577357384.
\newblock \doi{10.1613/jair.3912}.

\bibitem[He et~al.(2015)He, Zhang, Ren, and Sun]{He2015}
Kaiming He, Xiangyu Zhang, Shaoqing Ren, and Jian Sun.
\newblock {Deep Residual Learning for Image Recognition}.
\newblock \emph{arXiv}, 2015.
\newblock URL \url{http://arxiv.org/pdf/1512.03385v1.pdf}.

\bibitem[Hochreiter \& Schmidhuber(1997)Hochreiter and
  Schmidhuber]{Hochreiter1997}
Sepp Hochreiter and Urgen Schmidhuber.
\newblock {Long Short-Term Memory}.
\newblock \emph{Neural computation}, 9\penalty0 (8):\penalty0 1735--80, 1997.
\newblock ISSN 0899-7667.
\newblock \doi{10.1162/neco.1997.9.8.1735}.
\newblock URL \url{http://www.ncbi.nlm.nih.gov/pubmed/9377276}.

\bibitem[Ioffe \& Szegedy(2015)Ioffe and Szegedy]{Ioffe2015}
Sergey Ioffe and Christian Szegedy.
\newblock Batch normalization: Accelerating deep network training by reducing
  internal covariate shift.
\newblock \emph{arXiv}, 2015.
\newblock URL \url{http://arxiv.org/abs/1502.03167}.

\bibitem[Karpathy \& Li(2015)Karpathy and Li]{Karpathy2015}
Andrej Karpathy and Fei~Fei Li.
\newblock {Deep visual-semantic alignments for generating image descriptions}.
\newblock In \emph{Proceedings of the IEEE Computer Society Conference on
  Computer Vision and Pattern Recognition}, volume 07-12-June-2015, pp.\
  3128--3137, 2015.
\newblock ISBN 9781467369640.
\newblock \doi{10.1109/CVPR.2015.7298932}.

\bibitem[Kingma \& Ba(2014)Kingma and Ba]{Kingma2014}
Diederik Kingma and Jimmy Ba.
\newblock {Adam: A method for stochastic optimization}.
\newblock \emph{arXiv}, 2014.
\newblock URL \url{http://arxiv.org/abs/1412.6980}.

\bibitem[Mnih et~al.(2015)Mnih, Kavukcuoglu, Silver, Rusu, Veness, Bellemare,
  Graves, Riedmiller, Fidjeland, Ostrovski, Petersen, Beattie, Sadik,
  Antonoglou, King, Kumaran, Wierstra, Legg, and Hassabis]{Mnih2015}
Volodymyr Mnih, Koray Kavukcuoglu, David Silver, Andrei~a Rusu, Joel Veness,
  Marc~G Bellemare, Alex Graves, Martin Riedmiller, Andreas~K Fidjeland, Georg
  Ostrovski, Stig Petersen, Charles Beattie, Amir Sadik, Ioannis Antonoglou,
  Helen King, Dharshan Kumaran, Daan Wierstra, Shane Legg, and Demis Hassabis.
\newblock {Human-level control through deep reinforcement learning}.
\newblock \emph{Nature}, 518\penalty0 (7540):\penalty0 529--533, 2015.
\newblock ISSN 0028-0836.
\newblock \doi{10.1038/nature14236}.
\newblock URL \url{http://dx.doi.org/10.1038/nature14236}.

\bibitem[Mnih et~al.(2016)Mnih, Badia, Mirza, Graves, Lillicrap, Harley,
  Silver, and Kavukcuoglu]{Mnih2016}
Volodymyr Mnih, Adri{\`{a}}~Puigdom{\`{e}}nech Badia, Mehdi Mirza, Alex Graves,
  Timothy~P Lillicrap, Tim Harley, David Silver, and Koray Kavukcuoglu.
\newblock {Asynchronous Methods for Deep Reinforcement Learning}.
\newblock \emph{arXiv}, 2016.
\newblock URL \url{http://arxiv.org/abs/1602.01783}.

\bibitem[Parisotto et~al.(2015)Parisotto, Ba, and Salakhutdinov]{Parisotto2015}
Emilio Parisotto, Jimmy~Lei Ba, and Ruslan Salakhutdinov.
\newblock {Actor-Mimic: Deep Multitask and Transfer Reinforcement Learning}.
\newblock \emph{arXiv}, 2015.
\newblock URL \url{http://arxiv.org/abs/1511.06342}.

\bibitem[Rusu et~al.(2015)Rusu, {Gomez Colmenarejo}, Gulcehre, Desjardins,
  Kirkpatrick, Pascanu, Mnih, Kavukcuoglu, and Hadsell]{Rusu2015}
Andrei~A Rusu, Sergio {Gomez Colmenarejo}, Caglar Gulcehre, Guillaume
  Desjardins, James Kirkpatrick, Razvan Pascanu, Volodymyr Mnih, Koray
  Kavukcuoglu, and Raia Hadsell.
\newblock {Policy Distillation}.
\newblock \emph{arXiv}, 2015.
\newblock URL \url{http://arxiv.org/abs/1511.06295}.

\bibitem[Santana \& Hotz(2016)Santana and Hotz]{Hotz2016}
Eder Santana and George Hotz.
\newblock {Learning a Driving Simulator}.
\newblock \emph{arXiv}, 2016.
\newblock URL \url{http://arxiv.org/abs/1608.01230}.

\bibitem[Schmidhuber(2015)]{Schmidhuber2015}
J{\"{u}}rgen Schmidhuber.
\newblock {On Learning to Think: Algorithmic Information Theory for Novel
  Combinations of Reinforcement Learning Controllers and Recurrent Neural World
  Models}.
\newblock \emph{arXiv}, 2015.
\newblock URL \url{http://arxiv.org/abs/1511.09249}.

\bibitem[Yosinski et~al.(2015)Yosinski, Clune, Nguyen, Fuchs, and
  Lipson]{Yosinski2015}
Jason Yosinski, Jeff Clune, Anh Nguyen, Thomas Fuchs, and Hod Lipson.
\newblock {Understanding Neural Networks Through Deep Visualization}.
\newblock \emph{arXiv}, 2015.
\newblock URL \url{http://arxiv.org/abs/1506.06579}.

\end{thebibliography}
\bibliographystyle{iclr2017_conference}
\newpage
\appendix
\appendixpage
\section{Implementation details} \label{appendix}

\paragraph{} Due to the huge cost involved in training the agents, we have not exhaustively searched over all the possible hyper parameters. Still, we present them here for reproducibility of the results.

\begin{itemize}
\item \textit{Number of strategies}: As explained in Section \ref{gdata}, we need to pick a number $k$ of strategies we consider at each step. Initially, we pick $k=25$, raise it to $k=100$ at iteration $4$ and finally, at iteration $7$, we set it to $k=200$ for the remaining of the experiment.
\item \textit{Confidence interval}: We also need to pick how safe we want to play, i.e., where we set the threshold for the set of actions we consider. For simplicity, in Breakout and Pong, we set it to $0$ and only pick the safest option. In Demon Attack, initially we only consider actions with a survival probability higher than $0.2$ for three iterations. After that, we reduce it to $0.1$ for another three iterations. Then, we set it to $0.005$ until iteration $15$ and finally, reduce it to $0.001$ for the rest of the iterations.
\item \textit{Learning schedule}: For training we use the Adam \citep{Kingma2014} optimizer with a batch size of $100$. We use a learning rate of $10^{-4}$ for the first $3$ iterations, then reduce it to $5 \times 10^{-5}$ for the next $3$ iterations and finally set it to $10^{-5}$ for the rest of the experiment. We make a total of $4.8 \times 10^4$ parameter updates per iteration ($1.6 \times 10^4$ in the case of single-task networks) and divide the learning rate in half after $2.4 \times 10^4$ updates for the remaining of the iteration. We add a weight decay of $0.0001$ and clamp the gradients element-wise to the $[-1,1]$ range.
\end{itemize}

\paragraph{} Apart from that, at the beginning of each episode, we pick an $n \in [0, 30]$ uniformly at random and do not perform any action for the initial $n$ time steps of that episode. This idea was also used by \cite{Mnih2015} to avoid any possible over-fitting. In addition, we also press \textit{shoot} to start a new episode every time we die in Breakout, since in the first iterations the model learns that the safest option is not to start a new episode. This causes the agent to waste a lot of time without starting a new episode.

\end{document}